\definecolor{deepblue}{RGB}{35, 75, 105}
\definecolor{softteal}{RGB}{224, 242, 241}
\definecolor{mainteal}{RGB}{0, 121, 107}
\definecolor{vibrantorange}{RGB}{230, 81, 0}
\definecolor{metapurple}{RGB}{106, 27, 154}
\definecolor{subtlegray}{RGB}{250, 250, 250}
\newcommand{\bx}{\mathbf{x}}
\newcommand{\bz}{\mathbf{z}}
\newcommand{\bI}{\mathbf{I}}
\newcommand{\beps}{\bm{\epsilon}}
\newcommand{\E}{\mathbb{E}}
\newcommand{\cL}{\mathcal{L}}
\newcommand{\z}[1]{\bz_{t({#1})}}
\newcommand{\zs}[1]{\bz_{s({#1})}}
\newcommand{\kl}{\mathrm{KL}}
\newcommand{\KL}[2]{\mathrm{KL}({#1}\|{#2})}
\newcommand{\snr}{\text{SNR}}
\newcommand{\twonorm}[1]{\|{#1}\|_2}
\newcommand{\diff}{\mathop{}\!\mathrm{d}}
\newtheorem{theorem}{Theorem}
\theoremstyle{definition}
\renewenvironment{proof}{\noindent\textbf{Proof}\hspace*{1em}}{\qed\\}
\title{\LARGE Demystifying Diffusion Objectives: Reweighted Losses are \\Better Variational Bounds}
\author[]{Jiaxin Shi}
\author[]{Michalis K. Titsias}
\affil[]{Google DeepMind}
\affil[ ]{\texttt{\{jiaxins,mtitsias\}@google.com}}
\date{}
\begin{document}

\maketitle

\begin{abstract}
We derive a new theoretical interpretation of the 
reweighted losses that are widely used for training diffusion models. 
Our method is based on constructing a cascade of time-dependent variational lower bounds on the data log-likelihood, that provably improves upon the standard evidence lower bound and results in reduced data-model KL-divergences.
Combining such bounds gives rise to reweighted objectives that can be applied to any generative diffusion model including both continuous Gaussian diffusion and masked (discrete) diffusion models. 
Then, we showcase this framework in masked diffusion and report significant improvements over previous training losses in pixel-space image modeling, approaching sample quality comparable to continuous diffusion models.
Our results also provide a theoretical justification for the simple weighting scheme widely used in masked image models. 
\end{abstract}

\section{Introduction}

Diffusion models \citep{sohl2015deep,song2019generative,ho2020denoising,song2020score} have become the prominent generative modelling approach for  image generation~\citep{rombach2022high,ramesh2022hierarchical,saharia2022photorealistic}, audio~\citep{chen2021wavegrad,kong2021diffwave} and video generation~\citep{ho2022video,villegas2023phenaki}.
These methods have been also extended to discrete data  \citep{sohl2015deep,austin2021structured,hoogeboom2021argmax,campbell2022continuous,lou2023discrete} and applied to domains such as language modeling~\citep{nie2025large,ye2025dream}, graph generation \citep{vignac2023digress}, text-to-sound generation \citep{yang2023diffsound} or protein design \citep{wang2025finetuning}. 
Very recent work on masked diffusion \citep{shi2024simplified} suggests that the use of discrete diffusion enables a multimodal generative framework that unifies the treatment of diverse data, including images and text.

The standard framework to train a diffusion model, either continuous or discrete, is to view it as as a probabilistic latent-variable model and apply approximate Maximum-Likelihood  learning by optimizing the Evidence Lower Bounds (ELBOs) on the data log-likelihood. A detailed presentation of the ELBO and its continuous-time limit can be found in \citet{kingma2021variational}, while corresponding continuous-time ELBOs  for discrete masked diffusions  
were derived more
recently \citep{shi2024simplified,sahoo2024simple,ou2024}.  
 However, modern experimental practices in continuous Gaussian diffusion have converged to optimizing not the ELBO itself but a reweighted version of the ELBO. 
 The reason for this switch is  noticeably higher perceptual quality reported widely in literature on image generation~\citep{ho2020denoising,nichol2021improved}.

Although the reweighted loss is widely adopted in practice, its theoretical understanding has been quite limited. 
 \citet{kingma2023understanding} proposed to interpret the reweighted loss as a weighted sum of ELBOs of augmented data - where the data augmentation corresponds to Gaussian noises employed in continuous diffusion models. 
However, this derivation does not explain why ELBOs on the noise-perturbed data provide better signals for learning the denoiser.
It is also unclear how to extend such result to discrete diffusion models.

In this work, we give a new interpretation of the reweighted loss.
Our initial motivation is to reduce   the accumulation of errors when  approximating (during training) the backward denoising process. To this end,  we prove that the standard ELBO on the overall log-likelihood is not the best Maximum-Likelihood objective to train the model up to a given denoising time, but instead there is another improved time-dependent ELBO having smaller Kullback-Leibler divergence. Based on this observation, we suggest to train a full diffusion model not with the standard ELBO, but instead a weighted sum of all time-dependent ELBOs, visualized in \Cref{fig:objective}.
This analysis provides a theoretical interpretation of the reweighted loss that is not limited to the scope of continuous-time diffusion or Gaussian noise processes. Following this, we propose to adapt the reweighted schemes from continuous diffusion models to train masked  diffusion models. 
As we demonstrate in our results, the new training objectives 
for masked diffusion yield significant improvements in image sample quality, measured by Fréchet inception distance (FID), over previous standard ELBO based losses. 
    
\begin{figure}[t]
\centering
\resizebox{\columnwidth}{!}{%
\begin{tikzpicture}[
	node distance=1.5cm and 1.5cm,
	arrow/.style={->, >=stealth, thick, color=black!80},
	label text/.style={font=\small\sffamily},
	math text/.style={font=\large},
	arrow label/.style={font=\normalsize}, 
	annotation arrow/.style={->, >=stealth, blue, thick},
	zone boundary/.style={draw=blue!20, thick, rounded corners=15pt},
	blue fill style/.style={fill=blue!4},
	white fill style/.style={fill=white},
	]
	
	\def\rowOne{0}
	\def\rowTwo{-1.5}
	\def\rowMid{-3}
	\def\rowLast{-4.5}
	
	\def\colX{0}      %
	\def\colZz{2}     %
	\def\colZo{4}     %
	\def\colZmid{6}   %
	\def\colZpen{8}   %
	\def\colZT{10}    %
	
	\def\arrowTop{0.5}
	\def\arrowBottom{-5.0}
	
	\begin{scope}[on background layer]
		\def\padH{0.8} \def\padV{0.8}
		\coordinate (TL) at (\colX - \padH, \rowOne + \padV); 
		\coordinate (BR) at (\colZT + \padH, \rowLast - \padV);
		\coordinate (SplitTop) at (\colZz - 1.0, \rowOne + \padV); 
		\coordinate (SplitBottom) at (9.0, \rowLast - \padV);
		
		\begin{scope}
			\clip[rounded corners=15pt] (TL) rectangle (BR);
			\fill[blue fill style] (TL) rectangle (BR);
			\fill[white fill style] 
			($(TL) + (-2, 2)$) --     
			($(SplitTop) + (0, 2)$) -- 
			(SplitTop) --              
			(SplitBottom) --           
			($(SplitBottom) + (0, -2)$) -- 
			($(TL |- BR) + (-2, -2)$) -- 
			cycle;
			\draw[zone boundary, sharp corners] (SplitTop) -- (SplitBottom);
		\end{scope}
		\draw[zone boundary] (TL) rectangle (BR);
	\end{scope}

	\node[math text] (r1_x) at (\colX, \rowOne) {$\mathbf{x}$};
	\node[math text] (r1_z0) at (\colZz, \rowOne) {$\mathbf{z}_{t(0)}$};
	\node[math text] (r1_z1) at (\colZo, \rowOne) {$\mathbf{z}_{t(1)}$};
	\node[math text] (r1_dots) at (\colZmid, \rowOne) {$\dots$};
	\node[math text] (r1_zprev) at (\colZpen, \rowOne) {$\mathbf{z}_{t(T-1)}$};
	\node[math text] (r1_zT) at (\colZT, \rowOne) {$\mathbf{z}_{t(T)}$};
	
	\draw[arrow] (r1_z0) -- node[above, arrow label] {$q$} (r1_x);
	\draw[arrow] (r1_z1) -- node[above, arrow label] {$p_\theta$} (r1_z0);
	\draw[arrow] (r1_dots) -- node[above, arrow label] {$p_\theta$} (r1_z1);
	\draw[arrow] (r1_zprev) -- node[above, arrow label] {$p_\theta$} (r1_dots);
	\draw[arrow] (r1_zT) -- node[above, arrow label] {$p_\theta$} (r1_zprev);
	
	\node[math text] (r2_x) at (\colX, \rowTwo) {$\mathbf{x}$};
	\node[math text] (r2_z1) at (\colZo, \rowTwo) {$\mathbf{z}_{t(1)}$}; 
	\node[math text] (r2_dots) at (\colZmid, \rowTwo) {$\dots$}; 
	\node[math text] (r2_zprev) at (\colZpen, \rowTwo) {$\mathbf{z}_{t(T-1)}$};
	\node[math text] (r2_zcurr) at (\colZT, \rowTwo) {$\mathbf{z}_{t(T)}$};
	
	\draw[arrow] (r2_z1) -- node[above, arrow label] {$q$} (r2_x);
	\draw[arrow] (r2_dots) -- node[above, arrow label] {$p_\theta$} (r2_z1);
	\draw[arrow] (r2_zprev) -- node[above, arrow label] {$p_\theta$} (r2_dots);
	\draw[arrow] (r2_zcurr) -- node[above, arrow label] {$p_\theta$} (r2_zprev);
	
	\node at ($(r2_x)!0.5!(r2_x |- 0, \rowLast)$) {$\vdots$};
	\node at ($(r2_zcurr)!0.5!(r2_zcurr |- 0, \rowLast)$) {$\vdots$};
	
	\node[math text] (rT_x) at (\colX, \rowLast) {$\mathbf{x}$};
	\node[math text] (rT_zprev) at (\colZpen, \rowLast) {$\mathbf{z}_{t(T-1)}$};
	\node[math text] (rT_zT) at (\colZT, \rowLast) {$\mathbf{z}_{t(T)}$};
	
	\draw[arrow] (rT_zprev) -- node[above, arrow label] {$q$} (rT_x);
	\draw[arrow] (rT_zT) -- node[above, arrow label] {$p_\theta$} (rT_zprev);
	
	\def\graphX{-5.0} 
	
	\draw[thick, black!60] (\graphX, \arrowTop) -- (\graphX, \arrowBottom); 
	
	\node[left, font=\large] at (\graphX, \rowOne) {$w_1$};
	\draw[fill=white, pattern=north east lines, pattern color=black!40, draw=black!60] (\graphX, \rowOne - 0.15) rectangle ++(0.8, 0.3);
	
	\node[left, font=\large] at (\graphX, \rowTwo) {$w_2$};
	\draw[fill=white, pattern=north east lines, pattern color=black!40, draw=black!60] (\graphX, \rowTwo - 0.15) rectangle ++(1.5, 0.3);
	
	\node[left, font=\large] at (\graphX, \rowLast) {$w_T$};
	\draw[fill=white, pattern=north east lines, pattern color=black!40, draw=black!60] (\graphX, \rowLast - 0.15) rectangle ++(1.0, 0.3);
	
	\draw[blue, thick] plot [smooth, tension=0.8] coordinates {
		(\graphX, 0.2)          
		(\graphX + 0.5, -0.5)   
		(\graphX + 1.8, -1.8)   
		(\graphX + 0.8, -3.8)   
		(\graphX + 0.3, -5.0)   
	};
	
	\draw[annotation arrow] 
	(\graphX + 2.5, \arrowTop) -- 
	node[midway, sloped, below, color=black] {Improved ELBO} 
	(\graphX + 2.5, \arrowBottom);
	
	\node[font=\large, anchor=west] at (\graphX + 2.6, \rowOne) {$\mathcal{L}^{(1)}(\mathbf{x})$};
	\node[font=\large, anchor=west] at (\graphX + 2.6, \rowTwo) {$\mathcal{L}^{(2)}(\mathbf{x})$};
	\node[font=\large, anchor=west] at (\graphX + 2.6, \rowLast) {$\mathcal{L}^{(T)}(\mathbf{x})$};
	
	\draw[annotation arrow] 
	(\colZT + 1, \arrowBottom) -- 
	node[midway, sloped, below, color=black] {Tractable Sampling} 
	(\colZT + 1, \arrowTop);
	
	\begin{scope}[shift={(5, -5.5)}, scale=0.4]
		\draw[<->, black!60] (0, 1.5) -- (0,0) -- (2,0);
		\draw[blue, thick] (0,0) to[out=10, in=200] (1.5, 1.2);
	\end{scope}
	
	\draw[dashed, blue, ->, thick] 
	(\graphX + 0.4, -5.0) 
	to[out=-10, in=175]     
	node[midway, above, color=black, font=\footnotesize] {C.D.F} 
	(4.9, -5.3); 
	
	\node[below=1.1cm, align=center, text width=17cm] at (3, \rowLast) {
		\large Diffusion objectives: $ \mathcal{L}^{\tilde{w}}(\mathbf{x}) = {\displaystyle \lim_{T\to \infty}} \sum_{i=1}^T w_i \mathcal{L}^{(i)}(\mathbf{x})  =  \int_0^1 \textcolor{blue}{\tilde{w}(t)} \mathbb{E}_{q(\mathbf{z}_t|\mathbf{x})} \left[ L_{\text{denoise}}(\mathbf{z}_t, \mathbf{x}, t) \right] dt + \mathrm{C}$
	};
	
\end{tikzpicture}%
}
\caption{Diffusion objectives viewed as a weighted sum of the ELBOs of a sequence of models with optimal decoders (defined in \Cref{sec:optimal-decoders}). For continuous Gaussian diffusion models: $L_{\text{denoise}}(\mathbf{z}_t, \mathbf{x}, t) = \frac{1}{2} \lambda'(t) \twonorm{\beps - \beps_\theta(\bz_t, t)}^2 $. For masked diffusion models: $L_{\text{denoise}}(\mathbf{z}_t, \mathbf{x}, t) = -\frac{\alpha_t'}{1 - \alpha_t} \delta_{\bz_t, m}\cdot \bx^\top \log \mu_\theta(\bz_t)$. }
\label{fig:objective}
\end{figure}

\subsection{Related work}

The standard justification for using a reweighted loss over a likelihood-based loss (such as the ELBO) is that it prioritizes perceptually relevant signals over high frequency details~\citep{dieleman2024noise}. 
A similar argument is often used to explain the suboptimal sample quality of autoregressive image models despite their strong likelihood performance.
We offer an alternative perspective: the efficacy of reweighted loss is due to a more fundamental improvement in the ELBOs, specifically achieved by yielding smaller KL divergences.

Parallel with the development of discrete diffusion models, masked image models such as MaskGIT also proposed using a weighted sum of cross-entropy denoising losses at many noise levels~\citep{chang2022maskgit}. 
\citet{li2024autoregressive} further extended this method to modeling image latent-space~\citep[e.g., Stable Diffusion latents,][]{rombach2022high} by switching from cross-entropy to euclidean losses resulted from local continuous diffusion.
Despite the similarity between such models and masked diffusion models~\citep{zheng2024masked}, they often use a simple heuristic weighting scheme---denoising losses on mask inputs are summed over minibatches, divided by the total number of masks in the batch. 
\citet{you2025effective} summarizes the connection and differences of such models and masked diffusion models and proposed a hybrid model that uses simple weighting.
In \Cref{sec:discrete-diffusion} we show that the simple weighting alone can be explained as a special case of our framework, and, when it is applied to masked diffusion models in isolation, it leads to a significant improvement in sample quality without requiring any other modifications as in masked image models~\citep{chang2022maskgit,li2024autoregressive,li2025fractal} and \citet{you2025effective}.

\section{Background: Diffusion Models}

We %
consider the task of generative modeling: 
Given a dataset of observations $\bx$ with an underlying distribution $q(\bx)$, we aim to train a probabilistic model $p_\theta(\bx)$ that approximates $q(\bx)$. 
After training, we can draw novel samples from $p_\theta(\bx)$ that resemble observations in the dataset.

The class of models we will look into is diffusion models~\citep{sohl2015deep,song2019generative,ho2020denoising,song2020score}. 
Typically, 
we construct such models by first introducing a ``forward'' noising process that gradually transforms $\bx$ to %
noise.
By reversing this process, we obtain a generative model that creates data from noise.

Following \citet{kingma2021variational}, we define the forward process as a sequence of random variables $\bz_t$ indexed by time $t$ between $[0, 1]$, where $\bz_t$ represents the noise-perturbed data at time $t$.
For continuous observation $\bx$, a Gaussian noise process is commonly employed. 
In this case, the marginal distribution of $\bz_t$ is given by
\begin{align}
    q(\bz_t|\bx) =  \mathcal{N}(\bz_t|\alpha_t \bx, \sigma_t^2 \bI), 
\end{align}
where $\alpha_1 \approx 0$ and $\sigma_1 \approx 1$ such that $\bz_1$ follows a standard normal distribution. 
It is common to parameterize the forward process with respect to the signal-to-noise ratio (SNR) or log-SNR:
\begin{align}
    \snr(t) \triangleq \alpha_t^2 / \sigma_t^2,\quad \lambda(t) = \log \snr(t).
\end{align}
The transition from any time $s$ to $t$ ($s < t$) also follows a Gaussian distribution:
\begin{align}
    q(\bz_t|\bz_s) = \mathcal{N}\left(\bz_t\Big |\frac{\alpha_t}{\alpha_s}\bz_s, (1 - \kappa_{s,t})\sigma_t^2 \bI\right) \text{\quad where \quad} \kappa_{s,t} \triangleq \frac{\snr(t)}{\snr(s)}. 
\end{align}
A diffusion model seeks to revert the forward process, yielding a generative process that runs from time $1$ to $0$. 
To approximate this process, we introduce the reverse model $p_\theta(\bz_s|\bz_t)$ of the transition distribution from any time $t$ to $s$.
To derive the training objective of this reverse model, we define the discrete-time generative model by looking at finite time points $t(i)= i / T \in [0, 1]$, where $i=0,\ldots,T$.  
The joint probability distribution of the discretized %
model is
\begin{align} \label{eq:generative-joint}
    p_\theta(\bx, \z{0:T}) = p(\bx|\z{0})\prod_{i=0}^T p_\theta(\zs{i}|\z{i}),
\end{align}
where we let $s(i) = (i - 1)/T$. 
A standard derivation~\citep{sohl2015deep} gives the discrete-time evidence lower bound (ELBO) on the data log-likelihood,
\begin{align} \label{eq:elbo}
    \log p_\theta(\bx) 
    \geq \cL_T(\bx) &= \mathbb{E}_{q(\z{0}|\bx)}[\log p_\theta(\bx|\z{0})] - \KL{q(\z{T}|\bx)}{p(\z{T})} \notag \\
    & - \sum_{i=0}^T \mathbb{E}_{q(\z{i}|\bx)} [\KL{q(\zs{i}|\z{i}, \bx)}{p_\theta(\zs{i}|\z{i})}].
\end{align}
The reverse model is often chosen to mimic the structure of the true reverse distribution: $p_\theta(\bz_s|\bz_t) \triangleq q(\bz_s|\bz_t, \bx = \mu_\theta(\bz_t, t))$, using a neural network $\mu_\theta$ to predict the clean data (thus known as a ``denoiser'').
For Gaussian diffusion, %
we can obtain the following form of $q(\bz_s|\bz_t, \bx)$ through  Bayes' rule:
\begin{align}
    q(\bz_s|\bz_t, \bx) = \mathcal{N}\left(\bx_s\Big |(1 - \kappa_{s,t})\frac{\alpha_s}{\alpha_0} \bx + \kappa_{s,t}\frac{\alpha_s}{\alpha_t} \bx_t, \sigma_s^2 (1 - \kappa_{s,t})\bI\right).
\end{align}
In this case, one can show that the KL divergence terms in the ELBO simplify as
\begin{align} \label{eq:kl-t}
    \KL{q(\bz_s|\bz_t, \bx)}{p_\theta(\bz_s|\bz_t)} = \frac{1}{2}(\snr(s) - \snr(t))\twonorm{\bx - \mu_\theta(\bz_t, t)}^2.
\end{align}
Originated from \citet{ho2020denoising} to mimic the denoising score matching parameterization \citep{song2019generative}, the widely-used $\beps$-parameterization  leverages the noise form of $q(\bz_t|\bx)$: $\bz_t = \alpha_t \bx + \sigma_t \beps, \beps \sim \mathcal{N}(\mathbf{0}, \bI)$ and lets $\mu_\theta(\bz_t, t) \triangleq (\bz_t - \sigma_t \beps_\theta(\bz_t, t))/\alpha_t$. As shown by \cite{kingma2021variational}, in the continuous-time limit ($T\to \infty$) the ELBO in \eqref{eq:elbo} becomes
\begin{align}
    \cL_{\infty}(\bx) = \frac{1}{2}\int_0^1  \lambda'(t)\E_{\beps \sim \mathcal{N}(\mathbf{0}, \bI)}\left[ \twonorm{\beps - \beps_\theta(\bz_t, t)}^2\right] \diff t. 
\end{align}

\paragraph{Weighted losses.}
Although the ELBO seems a reasonable objective for training diffusion models, in practice 
reweighted versions of the ELBO  
empirically lead to better perceptual quality~\citep{ho2020denoising,nichol2021improved}. 
These %
objectives can be expressed as
\begin{align}
    \cL^{\tilde{w}}(\bx) 
    =\; &\frac{1}{2}\int_0^1 \tilde{w}(t) \lambda'(t) \E_{\beps \sim \mathcal{N}(\mathbf{0}, \bI)}\left[\twonorm{\beps - \beps_\theta(\bz_t, t)}^2\right] \diff t. \label{eq:weighted-loss-eps}
\end{align}
\citet{ho2020denoising} set the weight function as $\tilde{w}(t) = \frac{1}{\lambda'(t)}$ and  pointed out that it leads to higher sample quality measured by FID than the ELBO objective. 
This reweighted loss (also known as ``simple'' objective) is currently widely used.
\citet[Table 1]{kingma2023understanding} gives a full characterization of the various weighting functions proposed in the literature.

\section{Diffusion Models with Optimal Decoders}
\label{sec:optimal-decoders}

To understand the reweighted objective, our first observation is that the standard ELBO in \Cref{eq:elbo} uses the denoiser to construct the reverse transition distributions at all timesteps. 
However, there are other choices we can make about the generative model by mixing the denoiser with an ``optimal decoder'' introduced below. 

We define the following reverse generative model, where we replace the approximate reverse transition distributions between $\bx$ and $\z{i}$ with an ``optimal decoder'' $q(\bx|\bz_{t(i)})$, which is the ground truth reverse transition distribution satisfying $q(\bx|\z{i}) = \frac{q(\z{i}|\bx)q(\bx)}{q(\z{i})}$. 
The corresponding joint distribution is
\begin{align}
    p_\theta(\bx, \z{i:T}) = q(\bx|\z{i})\prod_{j=i+1}^T p_\theta(\zs{j}|\z{j}).
\end{align}
We note that the optimal decoder is intractable to compute. 
Therefore, ancestral sampling from this improved generative model is infeasible. 
Still, we are going to write out the ELBO and show that we can use it for training the denoiser. 
Similar to \Cref{eq:elbo}, the ELBO for the new generative model takes the form
\begin{align} \label{eq:optimal-decoder-elbo}
    \mathcal{L}^{(i+1)}(\bx) &\triangleq \mathbb{E}_{q(\z{i}|\bx)}[\log q(\bx|\z{i})] - \KL{q(\z{T}|\bx)}{p(\z{T})} \notag \\
    & - \sum_{j=i+1}^T \mathbb{E}_{q(\z{j}|\bx)} [\KL{q(\z{j-1}|\z{j}, \bx)}{p_\theta(\z{j-1}|\z{j})}].
\end{align}
Looking at \Cref{eq:optimal-decoder-elbo}, we observe that the optimal decoder only appears in the first term and is constant with respect to the denoiser parameters $\theta$. 
Therefore, although it is not straightforward to draw samples from the improved generative model, we can still leverage its ELBO to train the denoiser parameters. 

Intuitively, the larger $i$ is, the more ``optimal''  transition steps we use in our reverse process which entails a better model. 
Interestingly, we show in the following theorem that using more ``optimal'' transition steps leads to also an improved %
variational lower bound.

\begin{theorem}[Improved lower bounds] \label{thm:improved-lower-bounds}
For $\bx \sim q(\bx)$, $\mathcal{L}^{(i + 1)}(\bx)$ is on average a better lower bound\footnote{Note that we are comparing the lower bounds for slightly different model distributions (the generative model used in $\mathcal{L}^{(i)}$ has one more reverse transition parameterized by the denoiser than $\cL^{(i+1)}$). This is similar to the argument that the continuous-time ELBO of diffusion models is ``tighter'' than the discrete-time ELBO~\citep{kingma2021variational}, since they capture different reverse models.} than $\mathcal{L}^{(i)}(\bx)$:
\begin{align}
    \E_{q(\bx)}[\mathcal{L}^{(i + 1)}(\bx)] \geq \E_{q(\bx)}[\mathcal{L}^{(i)}(\bx)].
\end{align}
Since $\KL{q(\bx)}{p_\theta(\bx)} = -\mathbb{E}_{q(\bx)}[\log p_\theta(\bx)]  + \text{const} \leq -\mathbb{E}_{q(\bx)}[\cL(\bx)]$, 
this also implies that incorporating an additional optimal reverse transition step results in a smaller upper bound on the KL divergence between the data and model distributions.
\end{theorem}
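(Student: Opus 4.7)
The plan is to compute the difference $\E_{q(\bx)}[\mathcal{L}^{(i+1)}(\bx) - \mathcal{L}^{(i)}(\bx)]$ directly and show it equals a non-negative KL divergence. First, by inspecting \Cref{eq:optimal-decoder-elbo}, the prior term $\KL{q(\z{T}|\bx)}{p(\z{T})}$ and every summand with $j \geq i+1$ cancel between the two ELBOs. What remains is the change in the ``reconstruction'' term,
\begin{align*}
\E_{q(\z{i}|\bx)}[\log q(\bx|\z{i})] - \E_{q(\z{i-1}|\bx)}[\log q(\bx|\z{i-1})],
\end{align*}
plus the reappearance (with a $+$ sign) of the $j=i$ KL from the sum, namely $+\E_{q(\z{i}|\bx)}[\KL{q(\z{i-1}|\z{i},\bx)}{p_\theta(\z{i-1}|\z{i})}]$.

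Next I would take the expectation over $q(\bx)$ and rewrite each piece in terms of mutual information. For the reconstruction part, using $\E_{q(\bx,\z{j})}[\log(q(\bx|\z{j})/q(\bx))] = I(\bx;\z{j})$, the two log-density terms collapse to $I(\bx;\z{i}) - I(\bx;\z{i-1})$. For the remaining KL I would insert $q(\z{i-1}|\z{i})$ via $\log\!\tfrac{q(\z{i-1}|\z{i},\bx)}{p_\theta(\z{i-1}|\z{i})} = \log\!\tfrac{q(\z{i-1}|\z{i},\bx)}{q(\z{i-1}|\z{i})} + \log\!\tfrac{q(\z{i-1}|\z{i})}{p_\theta(\z{i-1}|\z{i})}$, giving the chain-rule decomposition
\begin{align*}
\E_{q(\bx,\z{i})}[\KL{q(\z{i-1}|\z{i},\bx)}{p_\theta(\z{i-1}|\z{i})}]
= I(\bx;\z{i-1}\mid \z{i}) + \E_{q(\z{i})}[\KL{q(\z{i-1}|\z{i})}{p_\theta(\z{i-1}|\z{i})}].
\end{align*}

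The key identity is the mutual-information chain rule $I(\bx;\z{i-1},\z{i}) = I(\bx;\z{i}) + I(\bx;\z{i-1}\mid \z{i}) = I(\bx;\z{i-1}) + I(\bx;\z{i}\mid \z{i-1})$, combined with the Markov property of the forward process $\bx \to \z{i-1} \to \z{i}$, which forces $I(\bx;\z{i}\mid \z{i-1}) = 0$. Hence $I(\bx;\z{i-1}\mid \z{i}) = I(\bx;\z{i-1}) - I(\bx;\z{i})$, and this term exactly cancels the $I(\bx;\z{i}) - I(\bx;\z{i-1})$ contribution from the reconstruction piece. What remains is
\begin{align*}
\E_{q(\bx)}[\mathcal{L}^{(i+1)}(\bx) - \mathcal{L}^{(i)}(\bx)] \;=\; \E_{q(\z{i})}[\KL{q(\z{i-1}|\z{i})}{p_\theta(\z{i-1}|\z{i})}] \;\geq\; 0,
\end{align*}
proving the theorem.

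The main obstacle, I expect, is spotting the right decomposition in step two: on first glance, replacing the more informative decoder $q(\bx|\z{i-1})$ by the less informative $q(\bx|\z{i})$ makes the reconstruction term \emph{worse}, and it is not obvious that this loss is compensated. The resolution is to split the extra KL into a conditional-MI piece that captures exactly the information lost in the decoder swap, plus a residual KL $\E_{q(\z{i})}[\KL{q(\z{i-1}|\z{i})}{p_\theta(\z{i-1}|\z{i})}]$ measuring how poorly $p_\theta$ approximates the true marginal reverse transition; the Markov structure then forces the cancellation and gives the nonnegative remainder. Everything else is routine bookkeeping.
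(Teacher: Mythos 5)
Your proof is correct and lands on exactly the same identity as the paper's, namely
\begin{align*}
\E_{q(\bx)}\big[\cL^{(i+1)}(\bx) - \cL^{(i)}(\bx)\big] = \E_{q(\z{i})}\big[\KL{q(\z{i-1}|\z{i})}{p_\theta(\z{i-1}|\z{i})}\big] \geq 0,
\end{align*}
but it gets there by a genuinely different decomposition. The paper works pointwise in $\bx$: it rewrites $\E_{q(\z{i}|\bx)}[\log q(\bx|\z{i})]$ as the (tight) ELBO of the joint $q(\bx|\z{i-1})q(\z{i-1}|\z{i})$ with variational posterior $q(\z{i-1}|\z{i},\bx)$, which produces the correction term $\E[\log(q(\z{i-1}|\z{i})/q(\z{i-1}|\z{i},\bx))]$ directly, and only takes the expectation over $q(\bx)$ at the very end. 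You instead average over $q(\bx)$ first and phrase everything in information-theoretic terms: the decoder swap costs $I(\bx;\z{i}) - I(\bx;\z{i-1}) \leq 0$, the extra KL term splits into $I(\bx;\z{i-1}\mid\z{i})$ plus the marginal-transition KL, and the chain rule together with $I(\bx;\z{i}\mid\z{i-1}) = 0$ forces the cancellation. The two arguments use the same two ingredients (Bayes/marginalization and the Markov property of the forward process $\bx \to \z{i-1} \to \z{i}$), but your version makes the Markov assumption explicit --- the paper uses it silently when it writes $q(\bx|\z{i})$ as the marginal of $q(\bx|\z{i-1})q(\z{i-1}|\z{i})$ --- and it isolates \emph{why} the theorem is not obvious: the reconstruction term genuinely degrades, and the degradation is exactly the conditional mutual information recovered from the KL term. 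The paper's route is more self-contained (no information-theoretic vocabulary, and it yields a pointwise-in-$\bx$ identity before averaging), while yours is arguably more transparent about the mechanism. Both are complete and correct.
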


\begin{proof}
We first take the difference between the two ELBOs as
\begin{align} \label{eq:l-diff}
    \cL^{(i+1)}(\bx) - \cL^{(i)}(\bx) &= \E_{q(\z{i}|\bx)}[\log q(\bx|\z{i})] - \E_{q(\z{i-1}|\bx)}[\log q(\bx|\z{i-1})] \notag \\
    &+ \E_{q(\z{i}|\bx)}[\KL{q(\z{i-1}|\z{i}, \bx)}{p_\theta(\z{i-1}|\z{i})}].
\end{align}
Next, we rewrite the first term by treating the term inside expectation as the log marginal likelihood of the joint model $q(\bx|\z{i-1})q(\z{i-1}|\z{i})$:
\begin{align*}
    &\E_{q(\z{i}|\bx)}[\log q(\bx|\z{i})] 
    = \E_{q(\z{i}|\bx)}\E_{q(\z{i-1}|\z{i}, \bx)}\left[\log \frac{q(\bx|\z{i-1})q(\z{i-1}|\z{i})}{q(\z{i-1}|\z{i}, \bx)}\right] \\
    &= \E_{q(\z{i-1}|\bx)}[\log q(\bx|\z{i-1})] + \E_{q(\z{i}|\bx)}\E_{q(\z{i-1}|\z{i}, \bx)}\left[\log \frac{q(\z{i-1}|\z{i})}{q(\z{i-1}|\z{i}, \bx)}\right].
\end{align*}
The first identity holds due to the fact that $q(\z{i-1}|\z{i}, \bx)$ is the posterior distribution of the joint model $q(\bx|\z{i-1})q(\z{i-1}|\z{i})$ and that the ELBO is tight with the true posterior distribution as the variational distribution.
Plugging this into \eqref{eq:l-diff},  we have
\begin{align*}
    \cL^{(i+1)}(\bx) - \cL^{(i)}(\bx) &=  \E_{q(\z{i}|\bx)}\E_{q(\z{i-1}|\z{i}, \bx)}\left[\log \frac{q(\z{i-1}|\z{i})}{q(\z{i-1}|\z{i}, \bx)} + \log \frac{q(\z{i-1}|\z{i}, \bx)}{p_\theta(\z{i-1}|\z{i})}\right] \\
    &= \E_{q(\z{i-1}, \z{i}|\bx)}\left[\log \frac{q(\z{i-1}|\z{i})}{p_\theta(\z{i-1}|\z{i})}\right], %
\end{align*}
and subsequently
\begin{align*}
    \E_{q(\bx)}[\cL^{(i+1)}(\bx) - \cL^{(i)}(\bx)] &= \E_{q(\bx)}\E_{q(\z{i-1}, \z{i}|\bx)}\left[\log \frac{q(\z{i-1}|\z{i})}{p_\theta(\z{i-1}|\z{i})}\right] \\
    &= \E_{q(\z{i-1}, \z{i})}\left[\log \frac{q(\z{i-1}|\z{i})}{p_\theta(\z{i-1}|\z{i})}\right] \\
    &= \E_{q(\z{i})}[\KL{q(\z{i-1}|\z{i})}{p_\theta(\z{i-1}|\z{i})}] \geq 0.
\end{align*}
Rearranging the terms concludes the proof.
\end{proof}

The above result shows that by using more optimal transitions leads to losses that are tighter bounds on the KL divergence between the data and model distributions.
On the other hand, it is important to note that ancestral sampling is infeasible in the part of the generative process that is replaced by optimal decoders. 
This causes a fundamental tradeoff between the tightness of the loss and the difficulty of simulating a sample -- the more ``optimal'' steps we use, the less analytically tractable transitions we can simulate in the reverse process, as visualized in \Cref{fig:objective}.

One might suggest that we 
can approximately simulate from these ``optimal''  reverse transitions 
by using the trained denoiser 
even if it is not the reverse process used in training. 
However, for a certain $i$ this assumes the denoiser must generalize on the low noise level samples it has never encountered during training.
Therefore, it is natural to ask whether we can construct an objective function that not only leverages the improved ELBOs in \eqref{eq:optimal-decoder-elbo} but also allows us to generate through the denoiser-parameterized reverse model at all timesteps. 
As we shall see next, the widely-used reweighted objective for diffusion models is an example of such improved objective.

\section{Reweighted Losses as Improved Variational Bounds}

We show in the following theorem that typical diffusion model objectives, often reweighted versions of the ELBO  \eqref{eq:elbo}, can be expressed as a weighted sum of the improved variational bounds $\cL^{(i)}$ plus a constant shift. 

\begin{theorem}[Reweighted objectives as improved variational bounds] \label{thm:reweighted-obj}
Let $\tilde{w}(t)$ be a continuous function such that, for all $t(j)$, its value is defined as
$
    \tilde{w}(t(j)) \triangleq \tilde{w}_j = \sum_{i=1}^j w_i
$. Then, we have
\begin{align} \label{eq:diffusion-obj-as-improved-bounds}
   \lim_{T\to \infty} \sum_{i=1}^T w_i \cL^{(i)}(\bx) = \cL^{\tilde{w}}(\bx) + \text{const},
\end{align}
where $\cL^{\tilde{w}}$, $\cL^{(i)}$ are defined as in \Cref{eq:optimal-decoder-elbo,eq:weighted-loss-eps}.
\end{theorem}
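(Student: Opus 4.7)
The plan is to substitute the explicit form \eqref{eq:optimal-decoder-elbo} of $\cL^{(i)}$ into the left-hand side of \eqref{eq:diffusion-obj-as-improved-bounds}, interchange the order of summation so that the weights $w_i$ collapse into their cumulative sums $\tilde{w}_j$, and then pass to the continuous-time limit via the denoising-KL identity \eqref{eq:kl-t}, essentially re-running the continuous-time ELBO argument of \citet{kingma2021variational} but with arbitrary weights.

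Concretely, I would first decompose each $\cL^{(i)}(\bx)$ from \eqref{eq:optimal-decoder-elbo} into the optimal-decoder term $\E_{q(\z{i-1}|\bx)}[\log q(\bx|\z{i-1})]$, the prior-matching term $-\KL{q(\z{T}|\bx)}{p(\z{T})}$, and the denoising sum $-\sum_{j=i}^T \E_{q(\z{j}|\bx)}[\KL{q(\z{j-1}|\z{j},\bx)}{p_\theta(\z{j-1}|\z{j})}]$, of which only the last depends on $\theta$. Multiplying by $w_i$, summing over $i$, and swapping the summation order in the double sum gives
\begin{align*}
\sum_{i=1}^T w_i \sum_{j=i}^T \E_{q(\z{j}|\bx)}[\KL{q(\z{j-1}|\z{j},\bx)}{p_\theta(\z{j-1}|\z{j})}] = \sum_{j=1}^T \tilde{w}_j\, \E_{q(\z{j}|\bx)}[\KL{q(\z{j-1}|\z{j},\bx)}{p_\theta(\z{j-1}|\z{j})}],
\end{align*}
since index $j$ collects exactly those $w_i$ with $i\le j$. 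The $\theta$-independent terms gather into some quantity $C_T(\bx)$ that I track separately.

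To finish the $\theta$-dependent piece, I would plug \eqref{eq:kl-t} together with the $\beps$-parameterization identity $\twonorm{\bx - \mu_\theta(\bz_t,t)}^2 = \snr(t)^{-1}\twonorm{\beps - \beps_\theta(\bz_t,t)}^2$ into each KL, obtaining $\tfrac{1}{2}(\snr(s(j))/\snr(t(j))-1)\twonorm{\beps - \beps_\theta(\z{j},t(j))}^2$. A first-order Taylor expansion of $\snr=e^{\lambda}$ with step $t(j)-s(j)=1/T$ turns $\snr(s(j))/\snr(t(j))-1$ into $-\lambda'(t(j))/T + o(1/T)$, so $\sum_j\tilde{w}_j\,\E[\KL_j]$ is a Riemann sum converging to $-\tfrac{1}{2}\int_0^1 \tilde{w}(t)\lambda'(t)\,\E_\beps[\twonorm{\beps-\beps_\theta(\bz_t,t)}^2]\diff t = -\cL^{\tilde{w}}(\bx)$. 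Combining this with the outer minus sign in front of the denoising sum yields \eqref{eq:diffusion-obj-as-improved-bounds} modulo the constant $C_T(\bx)$.

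The main item requiring care is showing $C_T(\bx)$ actually has a well-defined limit, so that the ``$+\,\mathrm{const}$'' in the theorem is genuinely constant in $\theta$ and does not silently hide a $T$-divergence. The prior-matching piece $\tilde{w}_T\,\KL{q(\z{T}|\bx)}{p(\z{T})}$ vanishes because $q(\z{T}|\bx)\to p(\z{T})$ by construction. For the optimal-decoder sum $\sum_i w_i\, \E_{q(\z{i-1}|\bx)}[\log q(\bx|\z{i-1})]$, I would apply Abel summation together with the identity $\E_{q(\z{i}|\bx)}[\log q(\bx|\z{i})] - \E_{q(\z{i-1}|\bx)}[\log q(\bx|\z{i-1})] = -\E_{q(\z{i}|\bx)}[\KL{q(\z{i-1}|\z{i},\bx)}{q(\z{i-1}|\z{i})}]$ already derived inside the proof of \Cref{thm:improved-lower-bounds}; this converts the sum into another Riemann sum whose integrand matches that of $\cL^{\tilde w}$ with $p_\theta$ replaced by $q$, producing a finite $\theta$-independent limit. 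Beyond this bookkeeping, the theorem reduces to a summation-interchange plus the standard continuous-time Riemann limit for Gaussian diffusion.
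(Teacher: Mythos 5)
Your proposal is correct and follows essentially the same route as the paper's proof: decompose each $\cL^{(i)}$ into its $\theta$-independent terms plus the denoising-KL sum, swap the order of summation so the $w_i$ collapse into $\tilde{w}_j$, and pass to the continuous-time Riemann limit via \eqref{eq:kl-t}. You go slightly further than the paper in two useful ways --- making explicit the $\snr(t)^{-1}$ conversion between the $\mu_\theta$- and $\beps_\theta$-forms needed to literally match \eqref{eq:weighted-loss-eps}, and verifying via Abel summation that the aggregated constant has a finite $T\to\infty$ limit rather than hiding a divergence --- but neither changes the underlying argument.
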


\begin{proof}
First, recall that the diffusion ELBO with optimal decoders can be written as
\begin{align*}
    \cL^{(i)} = -\sum_{j=i}^T \cL_{\kl}^{(j)}  + c_i \text{\quad for\quad} \cL_{\kl}^{(j)} \triangleq \E_{q(\z{j}|\bx)}[\KL{q(\zs{j}|\z{j},\bx)}{p_\theta(\zs{j}|\z{j})}],
\end{align*}
where $c_i = \mathbb{E}_{q(\z{i - 1}|\bx)}[\log q(\bx|\z{i - 1})] - \KL{q(\z{T}|\bx)}{p(\z{T})}$ is a constant with respect to denoiser parameters $\theta$. 
Plugging this into the left hand side of \Cref{eq:diffusion-obj-as-improved-bounds}, we have
\begin{align}
    \sum_{i=1}^T w_i \cL^{(i)}(\bx) = -\sum_{i=1}^T w_i \sum_{j=i}^T \cL_{\kl}^{(j)} (\bx) + c
    = -\sum_{j=1}^T \left(\sum_{i=1}^j w_i\right) \cL_{\kl}^{(j)}(\bx)  + c, \label{eq:l-sum}
\end{align}
where $c = \sum_{i=1}^T w_i c_i$ is a constant and we switch the order of summation in the second identity.
Next, we plug in \eqref{eq:kl-t} and take the continuous-time limit
(i.e., $T\to \infty$):
\begin{align*}
    &\lim_{T\to \infty} -\sum_{j=1}^T \left(\sum_{i=1}^j w_i\right) \cL_{\kl}^{(j)} + c \\
    &= \lim_{T\to \infty} -\frac{1}{2}\sum_{j=1}^T  \tilde{w}(t(j)) \frac{\snr(s(j)) - \snr(t(j))}{1/T}\E_{q(\z{j}|\bx)}[\twonorm{\bx - \mu_\theta(\z{j})}^2]\cdot \frac{1}{T} + c \\
    &=  \frac{1}{2}\int_0^1 \tilde{w}(t)  \snr'(t) \E_{q(\bz_t|\bx)}[\twonorm{\bx - \mu_\theta(\bz_t)}^2] \diff t + c.
\end{align*}
Comparing the last line with the definition of $\cL^{\tilde{w}}(\bx)$ concludes the proof.
\end{proof}

\Cref{thm:improved-lower-bounds,thm:reweighted-obj} together shows that $\cL^{\tilde{w}}$ is still a valid variational bound for training the generative model as each component of it lower bounds the data log likelihood up to a constant shift.
Moreover, it improves over the standard diffusion ELBO by leveraging tighter bounds on data-model KL divergence. %
Assigning non-zero weights to $\cL^{(i)}$ with small $i$ values in the weighted sum is also critical, as it ensures the denoiser is exposed to perturbed data at all noise levels, a necessary condition for ancestral sampling with the reverse model to function properly.
The derivation also reveals a requirement for the weighting function: $\tilde{w}(t)$ must be monotonic increasing with respect to $t$ in order for the weights to be positive. 
This aligns with the monotonic condition introduced in \citet{kingma2023understanding} through interpreting the weighting as a cumulative distribution function.

In \Cref{tab:existing-ws}, we list four popular weighting functions used in the  diffusion model literature~\citep{nichol2021improved,karras2022elucidating,lipman2022flow,kingma2023understanding}. 
\citet{kingma2023understanding} expressed these weighting schemes in a reparameterized form $\hat{w}(\lambda)$, where $\lambda \triangleq \lambda(t)$ is the log-SNR.
For convenience, we reproduce their calculations in \Cref{tab:existing-ws} where we explicitly write out the corresponding $\tilde{w}(t)$ form. 
We plot these weighting functions in \Cref{fig:wt-cont-diff} (left).

\begin{figure}[t]
\centering
\includegraphics[height=3.7cm,trim={0 0 7.2cm 0},clip]{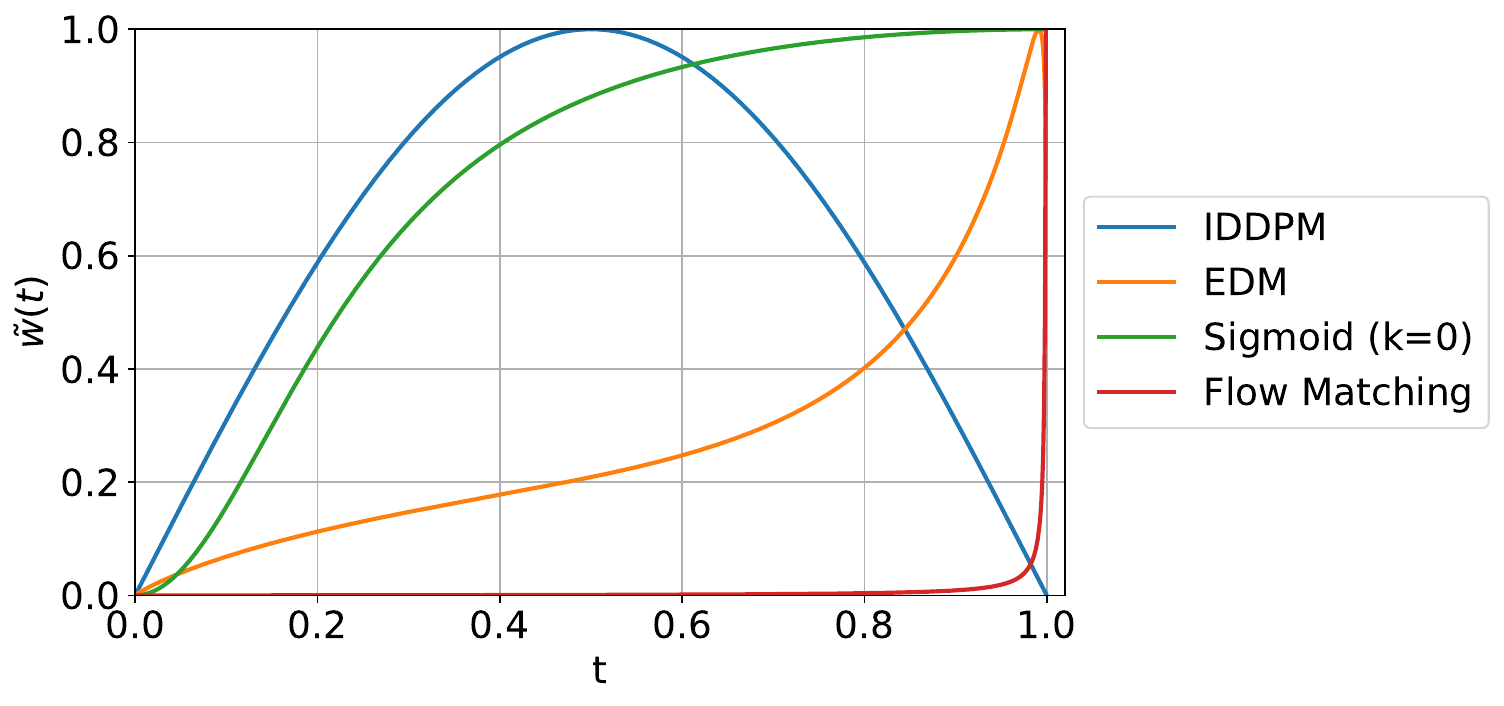}
\includegraphics[height=3.7cm,trim={1cm 0 0 0},clip]{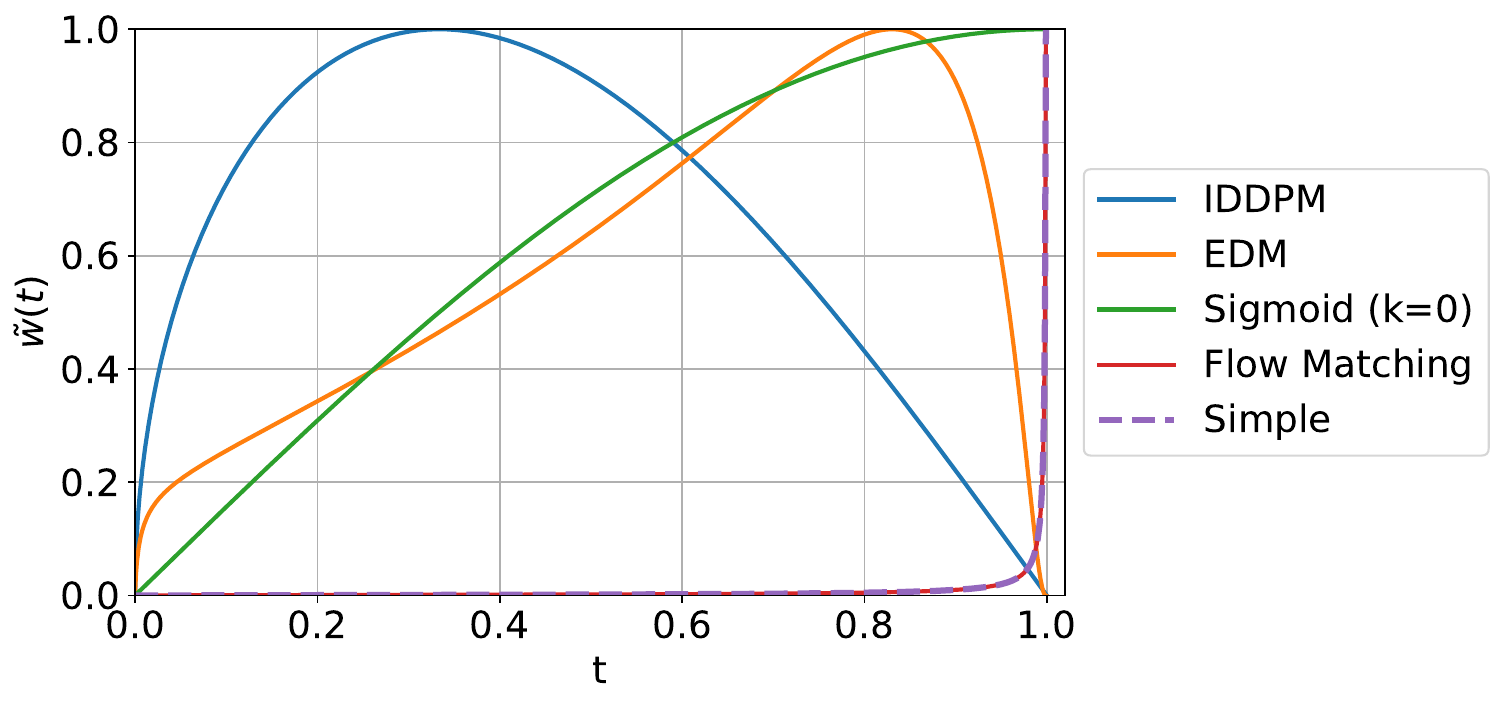}
\caption{Left: Weighting functions used in Gaussian diffusion models. Their formulas can be found in \Cref{tab:existing-ws}. Right: Weighting functions for masked diffusion models, all except the simple weighting are matched from the $w(\lambda)$s of Gaussian diffusion for the cosine schedule $\alpha_t$. All functions are plotted between $[0, 0.999]$ and are normalized with their maximum values in this interval (note that Flow matching and simple weighting approaches infinity at $t=1$). }
\label{fig:wt-cont-diff}
\end{figure}

It is noteworthy that three out of the four weighting functions illustrated in \Cref{fig:wt-cont-diff} exhibit monotonic or near-monotonic behavior (the EDM weighting displays a slight dip approaching time $1$).
The IDDPM weighting is an exception but was also proposed earlier than all other three weightings, which suggests the general practice in this area is converging to monotonic weightings, offering an empirical evidence for the improved variational bound argument we have presented.

\section{Application to Masked Diffusion Models}
\label{sec:discrete-diffusion}

\begin{table}[t] 
\centering
\caption{Weighting functions investigated for masked diffusion models. All functions, excluding the simple weighting, were migrated from continuous diffusion weightings by matching $\hat{w}(\lambda)$. Note that only the sigmoid, flow matching (FM), and simple weightings satisfy the necessary monotonicity requirement when paired with the cosine schedule $\alpha_t = 1 - \cos(\frac{\pi}{2}(1 - t))$.}\label{tab:wlambda-to-discrete}
\vskip 0.1in
\footnotesize
\begin{tabular}{lcccc}
\toprule
Name  & $\lambda(t)$ & $\hat{w}(\lambda)$ & $\tilde{w}(t)$  \\
\midrule
EDM & \multirow{5}{*}{$\log \frac{\alpha_t}{1 - \alpha_t}$} &  $p_{\mathcal{N}(2.4, 2.4^2)}(\lambda)\frac{e^{-\lambda} + 0.5^2}{0.5^2}$ & $w(\lambda(t))$ \\ [4pt]
IDDPM  &  & $\mathrm{sech}(\frac{\lambda}{2})$ & $2\sqrt{\alpha_t(1 - \alpha_t)}$ \\ [4pt]
Sigmoid &  &  $\mathrm{sigmoid}(-\lambda + k)$ & $\frac{1-\alpha_t}{1 - (1 - e^{-k})\alpha_t}$ \\ [4pt]
FM &  &  $e^{-\frac{\lambda}{2}}$ & $\sqrt{\frac{1 - \alpha_t}{\alpha_t}}$  \\ [4pt]
Simple &  &  - & $-\frac{1 - \alpha_t}{\alpha_t'}$  \\
\bottomrule
\end{tabular}
\end{table}

\subsection{Reweighted losses for masked diffusion models}
\label{sec:reweighted-loss-mdm}

Although the preceding sections have been focusing on continuous diffusion models and Gaussian noise -- our theory is more general and agnostic to the choice of diffusion processes. 
We will now illustrate this by deriving the improved variational bounds and reweighted loss specifically for masked diffusion models. 
Unlike continuous diffusion models, a dedicated weighting scheme has not previously been developed for this model class, although empirically people have explored other loss weightings in the context of masked image models~\citep{chang2022maskgit}.

We follow the notations in the MD4 masked diffusion model formulation of \citet{shi2024simplified}.
The forward masked diffusion is a noising process that gradually replaces the data elements with an artificially introduced ``mask'' state $m$.
Important to the characterization of such process is a ``masking schedule'' $\alpha_t$ that determines the expected proportion of unmasked elements at time $t$. 
The joint probability distribution and the discrete-time ELBO follow the same structure as in \Cref{eq:generative-joint,eq:elbo}. 
The KL divergence terms in the ELBO have the following form which is a weighted cross-entropy loss for denoising:
\begin{align}
    \cL_\kl^{(j)} %
    &= -\frac{\alpha_{s(j)} - \alpha_{t(j)}}{1 - \alpha_{t(j)}}\E_{q(\z{j}|\bx)} \left[\delta_{\z{j}, m}\cdot \bx^\top \log \mu_\theta(\z{j})\right].
\end{align}
Repeating the derivation in the proof of \Cref{thm:reweighted-obj} and plugging in the new definition of $\cL_\kl^{(j)}$, we obtain
\begin{align*}
    \cL^{\tilde{w}}(\bx)
    &= \lim_{T\to \infty} \sum_{j=1}^T  \tilde{w}(t(j)) \frac{\alpha_{s(j)} - \alpha_{t(j)}}{1 - \alpha_{t(j)}}\E_{q(\z{j}|\bx)}\left[\delta_{\z{j}, m}\cdot \bx^\top \log \mu_\theta(\z{j})\right] \\
    &=  -\int_0^1  \frac{\tilde{w}(t)\cdot \alpha_t'}{1 - \alpha_t}\E_{q(\bz_t|\bx)}\left[\delta_{\bz_t, m}\cdot \bx^\top \log \mu_\theta(\bz_t)\right] \diff t . 
\end{align*}

The MD4 ELBO corresponds to the special case $\tilde{w}(t) = 1$. 
Therefore, a natural question to ask is which monotonic weighting function can improve the perceptual quality of samples for masked diffusion models.
Below we explore a few potential candidates.

From the above reasoning, we see that the motivation for using a weighted loss in both continuous and masked diffusion models is the same, that is to achieve improved variational bounds. 
This suggests %
to adapt the effective weighting functions $\tilde{w}(t)$ from continuous diffusion models for use in the masked setting. 
However, a potential drawback to directly matching weightings in the time ($t$) space is the lack of reparameterization invariance.
Specifically, \citet{kingma2021variational} noted that the Gaussian diffusion ELBO is invariant to the log-SNR $\lambda(t)$ except its two end points. 
\citet{shi2024simplified} observed the same for masked diffusion models and defined the log-SNR as $\lambda(t) = \log \frac{\alpha_t}{1 - \alpha_t}$.
The reweighted objective written with respect to log-SNR is
\begin{align}
    \mathcal{L}^{\tilde{w}}(\bx) = \int_{-\infty}^{\infty} \tilde{w}(t(\lambda)) \sigma(\lambda) \E_{q(\bz_t|\bx)}\left[\delta_{\bz_t, m}\cdot \bx^\top \log \mu_\theta(\bz_t)\right] \diff \lambda. 
\end{align}
This implies that, if we modify the form of $\lambda(t)$ (or equivalently $\alpha_t$) while keep the two endpoints, the ELBO will stay the same, but the reweighted objective will be significantly different because $\tilde{w}(t(\lambda))$ breaks the invariance.

A potential fix to this problem is to match the $w(\lambda)$ function, with $\lambda$ replaced by masked diffusion's own log-SNR definition.
The new reweighted objective is
\begin{align}
    \cL^{\hat{w}}(\bx) = -\int_0^1  \frac{\hat{w}(\lambda(t))\cdot \alpha_t'}{1 - \alpha_t}\E_{q(\bz_t|\bx)}\left[\delta_{\bz_t, m}\cdot \bx^\top \log \mu_\theta(\bz_t)\right] \diff t.
\end{align}
\Cref{tab:wlambda-to-discrete} summarizes the weighting functions we obtain in this way (in both forms that take $\lambda$ and $t$ as inputs, respectively). 
One interesting case is the sigmoid weighting with $k = 0$, where the loss simplifies to an integration of unweighted cross-entropy losses over $\alpha_t$. 

\begin{figure}[ht]
\centering
\includegraphics[width=0.5\textwidth]{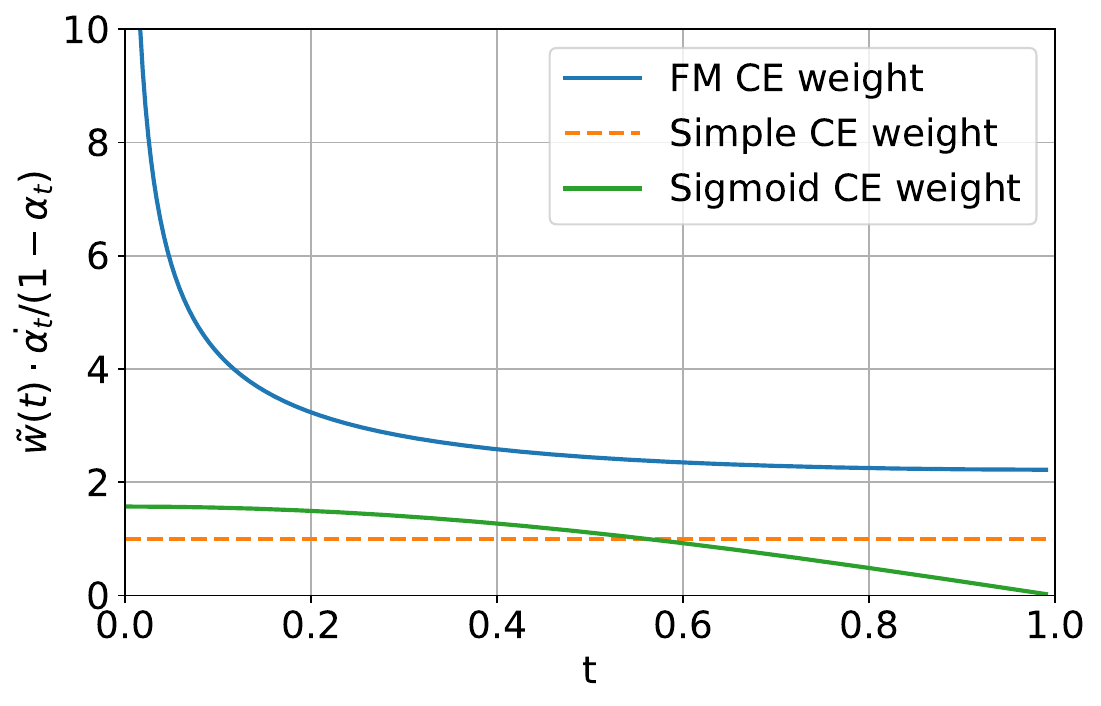}
\caption{Total cross-entropy loss weight under cosine schedule $\alpha_t = 1 - \cos(\frac{\pi}{2}(1 - t))$.}
\label{fig:ce-weight}
\end{figure}

Upon examining the weighting functions in \Cref{fig:wt-cont-diff} (right), we notice that the flow matching weighting exhibits a unique property: it approaches infinity as $t$ approaches 1.
One might consider this singularity a problem, but a further analysis reveals that this behavior balances with the original ELBO weighting for ``slow-start'' schedules like the cosine schedule ($\alpha_t = 1 - \cos(\frac{\pi}{2}(1 - t))$), where unmasking starts slowly in the reverse generation process. 
In this case, $\alpha_t'$ becomes zero at $t=1$, which counteracts the blow-up of $\tilde{w}(t)$ and ensures the total CE weight converges to a finite value at $t=1$, as shown in \Cref{fig:ce-weight}.
Moreover, the CE weight is flat for a large fraction of high noise region between 0.4 and 1.0. 
In contrast, the CE weight for sigmoid schedule vanishes as time approaches $1$.

As we mentioned earlier, without explicitly connecting to diffusion modeling, the literature on masked image models~\citep{chang2022maskgit,li2024autoregressive,li2025fractal} popularized a similar weighted denoising losses where a heuristic weighting scheme is used, i.e., denoising losses on mask inputs are summed over minibatches, divided by the total number of masks in the minibatch. 
If $\mathcal{B}$ denotes the minibatch and $N_{\text{masks}}^i$ the number of masks for data point $i \in \mathcal{B}$,
these methods use the normalization 
$\sum_{i \in \mathcal{B}} N_{\text{masks}}^i$. In contrast, in our masked diffusion objectives we average over the minibatch, i.e., we normalize the weighted sum of the input losses by $|\mathcal{B}|$ (instead of $\sum_{i \in \mathcal{B}} N_{\text{masks}}^i$). However, observe 
that due to the central limit theorem, as the minibatch $\mathcal{B}$ gets large, 
$ \frac{1}{|\mathcal{B}|}\sum_{i \in \mathcal{B}} N_{\text{masks}}^i 
\approx \E[N_{masks}^i]$, where $\E[N_{\text{masks}}^i]$  
is just a constant\footnote{In fact, 
assuming that $t \sim U(0,1)$ this expected value is $\E[N_{\text{masks}}^i] = N p$, %
where $p=\int_0^1 (1 -  \alpha_t) d t$.}.  
This implies that, for large enough minibatch, these  previous approaches
behave as our weighted ELBO objective with a constant CE weight, or equivalently, with  $\tilde{w}(t) = -\frac{1 - \alpha_t}{\alpha_t'}$. 
We call such $\tilde{w}(t)$ \emph{simple} weighting. 
We can check the simple weighting is monotonic for a cosine $\alpha_t$, as shown in \Cref{fig:wt-cont-diff} (right).
Therefore, the simple weighting also induces an improved variational bound and can be applied similarly to masked diffusion training.

\subsection{Evaluation} 
To evaluate whether the reweighted losses for masked diffusion models improve perceptual quality of samples, we conduct a pixel-space class-conditional generation experiment on ImageNet $64\times 64$. 
The experimental setup closely follows MD4~\citep{shi2024simplified} except that we switch the evaluation metric from likelihood (bits-per-dimension) to sample quality metrics including FID~\citep{heusel2017gans} and Inception Distances (IS).

\begin{table}[t]
    \centering %
    \caption{FID score on class-conditional ImageNet 64$\times$64~\citep{karras2022elucidating}. All results of our models are obtained without data augmentation or guidance. Samples are generated with 256 steps using ancestral sampling from the discrete-time reverse process as detailed in \citet{shi2024simplified}.}
    \label{tab:cifar}
    \vskip 0.05in
    \footnotesize
    \begin{tabular}{lrrr}
    \toprule %
     Method & \#Params &  FID ($\downarrow$) & IS ($\uparrow$) \\
    \midrule %
    \textbf{Gaussian Diffusion} \\
    IDDPM \citep{nichol2021improved} & & 2.92 & \\
    ADM \citep{dhariwal2021diffusion} & 296M & 2.07 & \\
    EDM \citep{karras2022elucidating} & 296M & 
    \bf 1.36 & \\
    VDM++ \citep{kingma2023understanding} & 296M &   1.43 & 63.7 \\
    \midrule
    \textbf{Masked Image Models} \\
    MAR \citep{li2025fractal} & 479M & 2.93 \\
    FractalMAR \citep{li2025fractal} & & 2.72 \\
    \midrule
    \textbf{Masked Diffusion} \\
    MD4~(ELBO) & 204M &   6.84 & 30.3 \\
    \textit{Weighting}:  & & & \\
    - IDDPM (non-monotonic)  & 204M &  11.14 & 22.9 \\
    - EDM (nearly-monotonic) & 204M &  4.42 & 37.3 \\ 
    - Sigmoid ($k=0$) & 204M &  3.91 & 40.1 \\
    - FM & 204M & 3.43 & 43.3 \\
    - Simple & 204M & \bf 2.96 & \bf 46.7 \\
    \cmidrule{1-2}
    - Simple & 324M & \bf 1.92 & \bf 57.9 \\
    \bottomrule %
    \end{tabular}
\end{table}

We adopted a network architecture similar to the one used by \citet{shi2024simplified}. The model has 204 million parameters; all architecture and training hyperparameters are summarized in \Cref{tab:train-hyper}.
We test all four choices of weighting functions adapted from continuous diffusion, despite two of them are non-monotonic and therefore not 
compatible with 
our theory.

First, we observed that the extremely non-monotonic IDDPM weighting results in a performance drop compared to standard ELBO. 
This is unsurprising as the non-monotonicity breaks the assumption of positive weights and thus do not lead to valid variational bounds.
In contrast, we observed a notable improvement when switching from the standard ELBO (the original MD4 objective) to the strictly monotonic sigmoid weighting.
We searched the hyperparameter $k$ and observed that $k = 0$ gives best performance among other choices.
The flow-matching (FM) weighting, indicated by our analysis to put significantly more weight on ELBOs that have smaller KL divergences, outperforms the sigmoid weighting.
These results provide strong evidence that our theoretical framework of reweighted objectives is applicable beyond a specific type of diffusion process. 

We also tested the simple weighting function, this time in masked diffusion context. 
Given the similar flatness of the simple weighting and the FM weighting in a wide high-noise time range, and recognizing that the denoising task at small noise regime is relatively straightforward, we expect the simple weighting to also have strong performance.
Aligned with our prediction, the simple weighting achieved competitive FID scores and even outperformed the FM weighting. 
We believe the improvement over FM weighting is due to the further downscaling of the weight at low noise regimes, which helps prevent overfitting on these easy tasks. 
The samples generated from models trained with different weighting functions are visually compared in \Cref{fig:img_samples_204m_monotonic,fig:img_samples_204m_nonmonotic} in Appendix.

Finally, to roughly match the model size used in the continuous diffusion model literature, we further increase the transformer dimension and number of heads, resulting in 325M parameters.
This boosts the FID to 1.92 for the simple weighting, better than continuous diffusion models like IDDPM and ADM.
Although this result remains behind state-of-the-art continuous diffusion models (e.g., EDM), it represents a new record for masked diffusion models on this dataset. 
Class-conditioned samples are shown in \Cref{fig:img_sample_324m} in Appendix.

\section{Conclusion}

Training diffusion models requires accurate approximation of the iterative backward or denoising process and dealing with errors that can  accumulate over time as the process iterates from high to low noise levels. To reduce the effect of  errors we first showed that the  standard ELBO on the data log-likelihood is not the best objective to train the model up to a given denoising time, but instead there is a better time-dependent ELBO having smaller Kullback-Leibler divergence. 
Based on this, we derived a new interpretation of reweighted losses used in Gaussian diffusion and generalized them to masked discrete diffusions. We reported significant improvements in image generation FID scores. For future work, it will be interesting to automate the selection of the weighting for a given data modality, and further extend such methods to 
simultaneously deal with multiple modalities. 

\paragraph{Acknowledgements.} We thank Ruiqi Gao for fruitful discussion on reweighted objectives and Arnaud Doucet for feedback on early drafts.

{\small
    \bibliography{ref}
}

\newpage
\appendix

\begin{table}[t] 
\centering
\caption{Weighting functions used in continuous diffusion models.}\label{tab:existing-ws}
\vskip 0.1in
\footnotesize
\begin{adjustbox}{max width=\textwidth}
\begin{tabular}{lcccc}
\toprule
Name & Parameterization & $\lambda(t)$ & $\hat{w}(\lambda)$ & $\tilde{w}(t)$  \\
\midrule
EDM  & mean prediction & $F_{\mathcal{N}(2.4, 2.4^2)}^{-1}(1 - t)$ &  $p_{\mathcal{N}(2.4, 2.4^2)}(\lambda)\frac{e^{-\lambda} + 0.5^2}{0.5^2}$ & $w(\lambda(t))$ \\ [4pt]
IDDPM & $\epsilon$ prediction & $-2\log\tan(\frac{\pi}{2}t)$ & $\mathrm{sech}(\frac{\lambda}{2})$ & $2\sin(\frac{\pi}{2}t)\cos(\frac{\pi}{2}t)$  \\ [4pt]
Sigmoid & $\epsilon$ prediction & $-2\log\tan(\frac{\pi}{2}t)$ &  $\mathrm{sigmoid}(-\lambda + k)$ & $\frac{1}{1 + e^{-k}\tan(\frac{\pi}{2}t)^{-2}}$\\ [4pt]
FM & velocity prediction & $2\log\frac{1 - t}{t}$ &  $e^{-\frac{\lambda}{2}}$ & $\frac{t}{1 - t}$ \\
\bottomrule
\end{tabular}
\end{adjustbox}
\end{table}

\begin{figure}[p]
\centering
\includegraphics[width=0.65\linewidth]{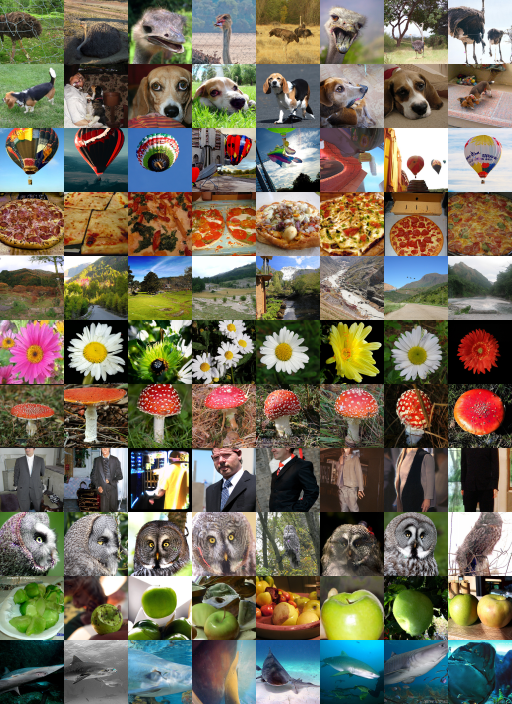}\\
\caption{
Class-conditional samples generated in 256 steps by the masked diffusion model (324M) trained with the simple weighting on ImageNet 64$\times$64 (FID: 1.92). Each row shows samples conditioned on a unique class. We observed a strong diversity in each class, showing good coverage of the data distribution. 
} \label{fig:img_sample_324m}
\end{figure}

\begin{figure}[p]
\centering
\makebox[0.475\linewidth]{ELBO, FID: 6.84}\hfill
\hspace{2.5mm}
\makebox[0.475\linewidth]{Sigmoid, FID: 3.91}\\
\includegraphics[width=0.475\linewidth]{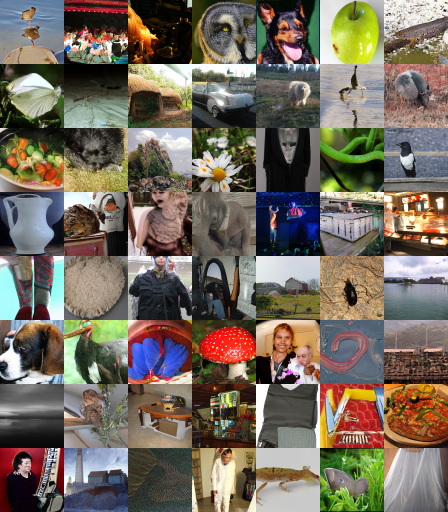}\hfill%
\hspace{2.5mm}\includegraphics[width=0.475\linewidth]{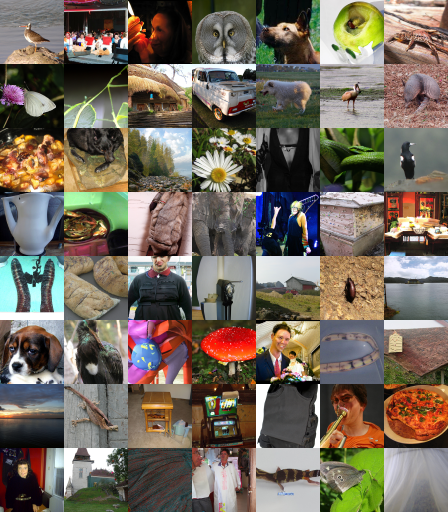}\\
\makebox[0.475\linewidth]{FM, FID: 3.43}\hfill
\hspace{2.5mm}
\makebox[0.475\linewidth]{Simple, FID: 2.96}\\
\includegraphics[width=0.475\linewidth]{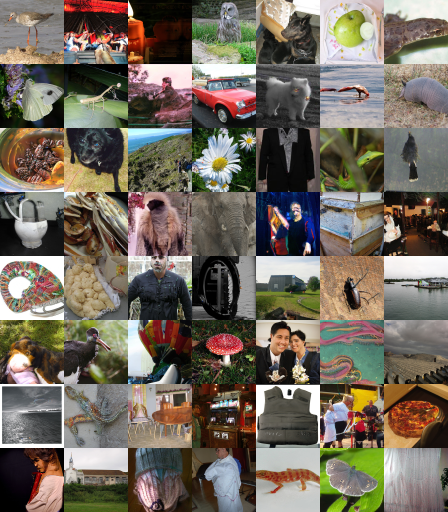}\hfill
\hspace{2.5mm}\includegraphics[width=0.475\linewidth]{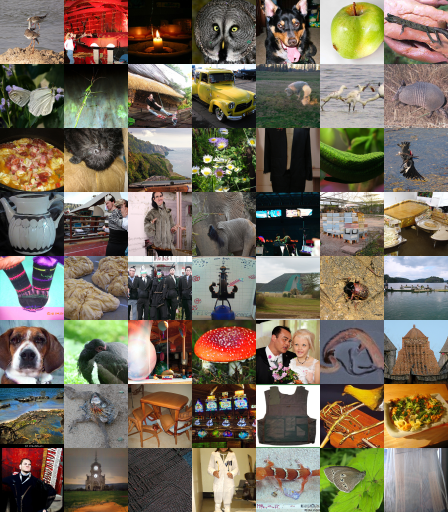}\\
\caption{
Class-conditional generation from masked diffusion models with 204M parameters trained with monotonic weighting functions (ELBO, Sigmoid, FM, Simple) on ImageNet 64$\times$64. Each image is conditioned on a unique class.
} \label{fig:img_samples_204m_monotonic}
\end{figure}

\begin{figure}[p]
\centering
\makebox[0.475\linewidth]{IDDPM, FID: 11.14}\hfill
\hspace{2.5mm}
\makebox[0.475\linewidth]{EDM, FID: 4.42}\\
\includegraphics[width=0.475\linewidth]{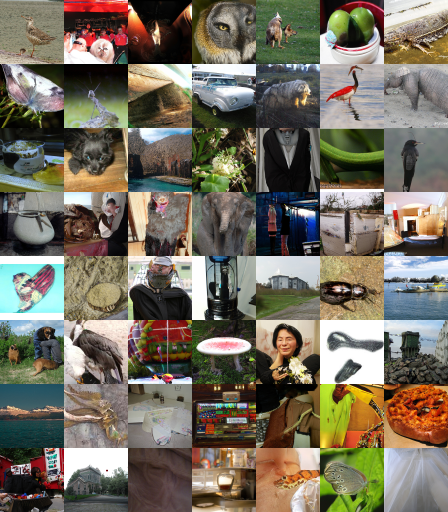}\hfill
\hspace{2.5mm}\includegraphics[width=0.475\linewidth]{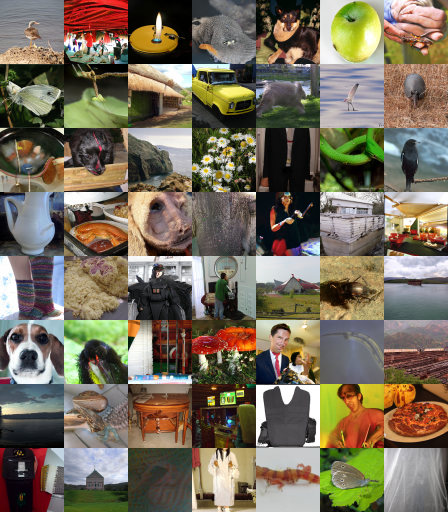}\\
\caption{
Class-conditional generation from masked diffusion models with 204M parameters trained with non-monotonic weighting functions (IDDPM, EDM) on ImageNet 64$\times$64. 
} \label{fig:img_samples_204m_nonmonotic}
\end{figure}

\begin{table}[p]
\centering
\footnotesize
\caption{
Training and network architecture hyperparameters for the 204M and 324M models. The 204M network architecture is the same as \citet{shi2024simplified}'s ImageNet $64 
\times 64$ experiment except that we removed time conditioning~\citep{ou2024}, introduced gating in MLPs and reduced the hidden dimension to 2048. For the 324M network we further replaced original fixed sine-cosine positional encoding in DiT~\citep{peebles2023scalable} with 2D ROPE~\citep{lu2024fit} besides increasing the network size.
} \label{tab:train-hyper}
\vskip 0.1in
\begin{tabular}{lcc}
\toprule
Hyperparameter     & 204M & 324M   \\
\midrule
\textbf{Training} \\
Number of epochs                &  \multicolumn{2}{c}{2M}    \\
Batch size                      & 512             & 1024    \\
Learning rate   & \multicolumn{2}{c}{$2{\times}10^{-4}$} \\
Optimizer  &  \multicolumn{2}{c}{AdamW} \\
Adam $\beta_1$ & \multicolumn{2}{c}{0.9} \\
Adam $\beta_2$ & \multicolumn{2}{c}{0.99} \\
Gradient clipping & \multicolumn{2}{c}{No}      \\
Weight decay & \multicolumn{2}{c}{0.03} \\
LR warm-up steps  & \multicolumn{2}{c}{100}              \\
LR schedule  & \multicolumn{2}{c}{Cosine} \\
EMA          & \multicolumn{2}{c}{0.9999}   \\
Dropout     & \multicolumn{2}{c}{0.1}       \\
\midrule
\textbf{ResNet blocks} \\
Number of blocks (input + output)  & \multicolumn{2}{c}{8 + 8}         \\
Kernel size & \multicolumn{2}{c}{$3\times 3$}        \\
Activation & \multicolumn{2}{c}{SiLU} \\
Number of channels  & \multicolumn{2}{c}{256}     \\
GroupNorm      & \multicolumn{2}{c}{32 groups}       \\
Class conditioning & \multicolumn{2}{c}{AdaLN-zero in GroupNorm} \\
\midrule
\textbf{DiT} \\
Patch size  & \multicolumn{2}{c}{$2\times 2$} \\
Number of blocks  & \multicolumn{2}{c}{20}        \\
Number of heads  & 12 & 16 \\
Head dimension  & \multicolumn{2}{c}{64} \\
Positional encoding & Sine-Cosine & 2D ROPE \\
MLP hidden dimension & 2048 & 2752 \\
Depth scaled init for MLP & \multicolumn{2}{c}{Yes} \\
Gating in MLP & \multicolumn{2}{c}{GLU} \\
Class conditioning & \multicolumn{2}{c}{DiT-style AdaLN-zero} \\
\midrule
\textbf{Others} \\
Input embedding dimension & \multicolumn{2}{c}{256} \\
Class embedding dimension & \multicolumn{2}{c}{256} \\
Time conditioning & \multicolumn{2}{c}{No} \\
Use bfloat16    & \multicolumn{2}{c}{Yes} \\
\midrule
\end{tabular}
\end{table}

\end{document}